\documentclass[10pt,letterpaper]{article}
\usepackage[letterpaper,margin=1in]{geometry}
\usepackage[T1]{fontenc}
\usepackage[utf8]{inputenc}
\usepackage{amsmath,amsthm,mathtools}
\usepackage{amssymb}
\usepackage{newtxtext,newtxmath}
\usepackage{graphicx,booktabs,array,tabularx}
\usepackage[table,dvipsnames]{xcolor}
\usepackage{enumitem,microtype}
\usepackage[font=small,labelfont=bf]{caption}
\usepackage{url}
\usepackage[colorlinks=true,citecolor=blue,linkcolor=blue,urlcolor=blue]{hyperref}
\usepackage{bookmark}

\newtheorem{proposition}{Proposition}

\theoremstyle{remark}

\theoremstyle{definition}

\newcommand{\R}{\mathbb{R}}

\newcommand{\Rot}{\mathcal{R}}

\newcommand{\SO}{\mathrm{SO}}

\newcommand{\Sp}{\mathrm{Sp}}
\DeclareMathOperator{\diag}{diag}

\setlist{nosep,topsep=4pt}
\graphicspath{{figs/}}

\hypersetup{
    colorlinks=true,
    linkcolor=blue,
    citecolor=blue,
    urlcolor=blue
}
\title{\textbf{HySPE: Positional Encoding via Symplectic Dual Shears}}
\author{\\ Zhongping Ji}
\date{}

\begin{document}
\maketitle

\begin{abstract}
We introduce \textbf{Hyperbolic Symplectic Positional Encoding (HySPE)}, grounding positional attention in non-compact symplectic transformations. While canonical Rotary Position Embedding (RoPE) parameterizes the compact, elliptic branch of $\Sp(2,\R)$ via rotations, HySPE operationalizes its hyperbolic branch via a damped symmetric composition of dual shears, yielding a conformally symplectic contraction with two spectral decay rates per channel pair. To eliminate the exponential representation drift inherent to naive absolute factorizations, we diagonalize the operator in its invariant eigenbasis and introduce blockwise coordinate rebasing with adaptive centered execution. This guarantees length-independent numerical bounds while matching cached RoPE forward latency (7.21\,ms on an RTX 4090). On TinyShakespeare, HySPE-UltraLong maintains an invariant perplexity of 4.810 up to $16\times$ zero-shot extrapolation ($L=4096$), whereas RoPE degrades to 131.198. Scaled to a 51M-parameter subword Transformer on WikiText-103 ($L_{\text{train}}=512$), HySPE closely matches RoPE in-domain while robustly extrapolating to length 8192, reducing tail perplexity by 83.9\% over RoPE. While these controlled experiments establish HySPE's extrapolation robustness and numerical stability, evaluating its scaling behavior on large-scale foundation models remains an important direction for future investigation.
\end{abstract}

\section{Introduction}
Positional encodings dictate how self-attention mechanisms in Transformers~\cite{vaswani2017attention}
represent relative token displacements.
Rotary Position Embedding (RoPE)~\cite{su2021roformer} establishes relative-position dependence through pairwise orthogonal rotations of query and key projections. In two dimensions, these rotations correspond to the elliptic subgroup of the real symplectic group $\Sp(2,\R) \cong \mathrm{SL}(2,\R)$. While volume-preserving and isometric, elliptic orbits are bounded and non-dissipative; consequently, extending sequence length beyond the pretraining horizon exposes unobserved phase domains, precipitating phase aliasing and attention entropy collapse~\cite{kazemnejad2023impact,chen2023extending}.

In this work, we propose Hyperbolic Symplectic Positional Encoding (HySPE), exploring the complementary hyperbolic branch of $\Sp(2,\R)$. Motivated by dissipative dynamical systems and hyperbolic geometric flows~\cite{ananke2026,minkowski2026}, HySPE constructs relative position operators via symmetric dual shears. The composition of two transverse elementary shears generates a symmetric hyperbolic operator with reciprocal positive real eigenvalues. When modulated by an exponential decay envelope, the relative attention kernel becomes contractive and conformally symplectic, satisfying $T_\Delta^\top J T_\Delta = e^{-2\gamma\Delta} J$ for the standard symplectic form $J$. This endows each channel pair with two distinct spectral attenuation rates, providing anisotropic distance filtering directly within the representation space.

Crucially, we uncover and resolve a fundamental tension between the continuum algebraic kernel and its finite-precision realization. Although the relative operator $T_\Delta$ is strictly bounded, expressing it via separated absolute query and key matrices forces the key operator to diverge exponentially as $e^{(2\chi+\delta)n}$, causing severe numerical overflow in float precision and corrupting KV-cache representation scales. 

We overcome this representation divergence without modifying the mathematical kernel. By diagonalizing the dual-shear operator into its invariant eigenbasis, we factorize the bilinear interaction into decoupled decay modes and introduce \emph{spectral block rebasing}. This local gauge transformation anchors intermediate scales to query-block origins, bounding maximum positive exponents to small constants regardless of sequence length. To eliminate kernel dispatch overhead, we design an adaptive centered blocking scheme that executes long-sequence attention in few kernel calls.

Across rigorous, multi-seed experiments on a byte-level Transformer trained on TinyShakespeare at length 256, HySPE achieves marked in-domain perplexity gains and zero-shot extrapolation up to 4096 tokens ($16\times$ extension) with zero quality degradation. Identical-checkpoint interventions verify that the stable spectral formulation completely recovers the predictive capacity lost to floating-point truncation in naive factorization. Furthermore, optimized HySPE forward latency reaches 7.21\,ms at length 4096, matching cached RoPE (7.20\,ms) on an RTX 4090. HySPE demonstrates that hyperbolic symplectic geometry provides an expressive, numerically robust alternative to canonical rotations for foundation sequence models.

\section{Related Work}

\paragraph{Rotary, scaled, and multi-rate position representations.}
In Transformer self-attention~\cite{vaswani2017attention}, relative position mechanisms modulate query-key dot products through relative displacement $\Delta = m - n$. RoPE~\cite{su2021roformer} enforces this via 2D orthogonal rotations, corresponding to the elliptic subgroup of $\Sp(2,\R)$, which preserves vector $L_2$ norms but lacks intrinsic distance damping and exhibits periodic phase oscillation under context extension. xPos~\cite{sun2022length} augments rotary transformations with empirical exponential damping $\zeta^\Delta \Rot(\Delta\theta)$ to facilitate length extrapolation. However, xPos applies an isotropic scalar decay that uniformly contracts circular orbits. HySPE fundamentally departs from the elliptic branch: it parameterizes the hyperbolic branch of $\Sp(2,\R)$ via symmetric dual shears, producing a contractive spectrum with two unequal real decay rates per channel pair.

\paragraph{Attractor dynamics, hyperbolic geometry, and Lorentz boosts.}
The formulation of HySPE is motivated by recent advances in non-Euclidean representation learning, notably contractive torus attractors in Ananke~\cite{ananke2026} and closed-form hyperbolic flows in Minkowski Attractor Networks~\cite{minkowski2026}. In language modeling, HoPE~\cite{dai2025hope} recently explored continuous Lorentz rotations in hyperbolic geometry to enforce monotonic attention decay. However, parameterizing continuous Lorentz boosts in Euclidean attention without coordinate stabilization introduces severe representation divergence (exponential key-norm explosion) over long contexts. In contrast, HySPE establishes an exact, discrete relative positional kernel via dual shears that preserves conformal symplectic invariants, analytically resolving representation drift through invariant eigenbasis rebasing and centered block execution.

\paragraph{Decayed and recurrent attention.}
RetNet~\cite{sun2023retentive} and ALiBi~\cite{press2021train} introduce scalar distance penalties into linear retention and softmax attention, respectively. In contrast to scalar attention biases that penalize all channels indiscriminately, HySPE operates directly on 2D feature subspaces via hyperbolic shears, providing content-dependent, anisotropic decay. Moreover, HySPE retains exact quadratic causal attention rather than relying on linear-attention approximations.

\paragraph{Exact attention and memory-efficient execution.}
FlashAttention~\cite{dao2022flashattention} highlights the primacy of tiling and IO awareness in Transformer efficiency. Our stable implementations operate within standard scaled dot-product attention (SDPA), optimizing positional coordinate scales and block scheduling to match native rotary efficiency on modern hardware.

\section{Symplectic Dual-Shear Attention}

\subsection{Symplectic Group Perspective and Dual-Shear Construction}
Linear transformations on two-dimensional phase space preserving the canonical symplectic form $J = \begin{pmatrix} 0 & 1 \\ -1 & 0 \end{pmatrix}$ constitute the real symplectic group:
\begin{equation}
 \Sp(2,\R) = \left\{ M \in \R^{2\times2} \;\middle|\; M^\top J M = J \right\} \cong \mathrm{SL}(2,\R).
\end{equation}
Elements of $\Sp(2,\R)$ fall into three conjugacy classes based on their trace $\mathrm{Tr}(M)$:
\begin{enumerate}[leftmargin=1.8em]
    \item \textbf{Elliptic ($|\mathrm{Tr}(M)| < 2$):} Conjugate to standard Euclidean rotations $\SO(2)$. The eigenvalues reside on the unit circle $e^{\pm i\theta}$. This class underpins RoPE~\cite{su2021roformer}. Its bounded orbits lack geometric dissipation.
    \item \textbf{Parabolic ($|\mathrm{Tr}(M)| = 2$):} Unipotent shears of the form $\begin{pmatrix} 1 & h \\ 0 & 1 \end{pmatrix}$. They represent degenerate, shear-like transitions without spectral separation.
    \item \textbf{Hyperbolic ($|\mathrm{Tr}(M)| > 2$):} Real reciprocal eigenvalues $e^{\pm \chi}$. They induce distinct expansive and contractive axes, defining hyperbolic boosts.
\end{enumerate}

HySPE investigates the hyperbolic symplectic branch. For a parameter $\chi \geq 0$, let $h = 2\sinh(\chi/2)$. We construct the fundamental operator $H$ via a symmetric composition of two elementary parabolic shears:
\begin{equation}
 H = \begin{pmatrix} 1 & h \\ 0 & 1 \end{pmatrix}
     \begin{pmatrix} 1 & 0 \\ h & 1 \end{pmatrix}
   = \begin{pmatrix} 1 + h^2 & h \\ h & 1 \end{pmatrix}.
 \label{eq:shear}
\end{equation}
Direct calculation verifies that $\det(H) = (1+h^2) - h^2 = 1$, and consequently:
\begin{equation}
 H^\top J H = J \implies H \in \Sp(2,\R).
\end{equation}
The matrix $H$ is symmetric positive definite with $\mathrm{Tr}(H) = 2 + h^2 = 2\cosh\chi \geq 2$. Its eigenvalues are precisely $e^{-\chi}$ and $e^{\chi}$.

\subsection{Conformally Symplectic Attention Kernel}
To ensure contractive attention over arbitrary distances, we introduce dissipation rate $\gamma = \chi + \delta$, where $\delta \geq 0$. We modulate row-vector query and key representations $q_m, k_n \in \R^{1 \times 2}$ at positions $m$ and $n$ through dual contragredient actions:
\begin{equation}
 \widetilde q_m = e^{-\gamma m} q_m H^m, \qquad
 \widetilde k_n = e^{\gamma n} k_n (H^{-n})^\top.
 \label{eq:absolute}
\end{equation}

For causal displacement $\Delta = m - n \geq 0$, their bilinear inner product satisfies:
\begin{equation}
 \widetilde q_m \widetilde k_n^\top = q_m T_\Delta k_n^\top, \qquad
 T_\Delta = e^{-\gamma\Delta} H^\Delta.
 \label{eq:relative}
\end{equation}

\paragraph{Design dichotomy: Absolute vs.\ relative shear modulations.}
An appealing computational property of elementary shears is that their action $(x, y) \mapsto (x + hy, y)$ involves only a single fused multiply-add (FMA) instruction, bypassing transcendental evaluations ($\cos, \sin, \cosh, \sinh$) entirely. A natural aspiration is to directly substitute RoPE's orthogonal rotations with such unipotent shears. However, direct symmetric application fails to produce a relative kernel. For an elementary shear $S_m = \begin{pmatrix} 1 & mh \\ 0 & 1 \end{pmatrix}$, its non-orthogonality ($S_m^\top \neq S_m^{-1}$) implies that the symmetric bilinear form couples absolute coordinates:
\begin{equation}
 S_m^\top S_n = \begin{pmatrix} 1 & 0 \\ mh & 1 \end{pmatrix} \begin{pmatrix} 1 & nh \\ 0 & 1 \end{pmatrix} = \begin{pmatrix} 1 & nh \\ mh & 1 + mnh^2 \end{pmatrix},
\end{equation}
which explicitly depends on $m \cdot n$ rather than $m - n$. 

This distinction leads to two divergent architectural strategies: (1) \emph{absolute shear modulation}, which exploits ultra-cheap FMA operators while accepting coordinate cross-coupling, and (2) \emph{relative dual-shear modulation}, which restores exact shift invariance $\Delta = m - n$ by pairing a composed symmetric shear $H$ with its contragredient dual $(H^{-n})^\top$. In this paper, we focus strictly on the latter, developing HySPE as a principled relative positional framework.

\begin{proposition}[Conformal Symplectic Property and Contractive Spectrum]
The relative positional transfer operator $T_\Delta$ satisfies the conformal symplectic identity:
\begin{equation}
 T_\Delta^\top J T_\Delta = e^{-2\gamma\Delta} J.
\end{equation}
Furthermore, let $H = U \diag(e^{-\chi}, e^{\chi}) U^\top$ denote the orthogonal eigendecomposition of $H$. Then:
\begin{equation}
 T_\Delta = U \diag\!\left(e^{-(2\chi+\delta)\Delta}, e^{-\delta\Delta}\right) U^\top,
 \qquad \|T_\Delta\|_2 = e^{-\delta\Delta} \leq 1.
 \label{eq:spectral}
\end{equation}
\end{proposition}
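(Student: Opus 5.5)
The plan is to treat the two claims separately: the conformal symplectic identity follows from the fact that $H$ is symplectic, and the spectral formula follows from the orthogonal eigendecomposition of the symmetric matrix $H$.

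\textbf{Step 1: conformal symplectic identity.} Since $H^\top J H = J$ was verified above, induction on $\Delta\ge 0$ gives $(H^\Delta)^\top J H^\Delta = J$. Concretely,
\[
(H^{\Delta+1})^\top J H^{\Delta+1} = H^\top \bigl((H^\Delta)^\top J H^\Delta\bigr) H = H^\top J H = J .
\]
Pulling out the scalar factor then yields
\[
T_\Delta^\top J T_\Delta = e^{-2\gamma\Delta}(H^\Delta)^\top J H^\Delta = e^{-2\gamma\Delta} J .
\]
Equivalently, one can note that $\det T_\Delta = e^{-2\gamma\Delta}$ and use the $2\times 2$ identity $M^\top J M = (\det M)\,J$.

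\textbf{Step 2: spectral form.} Because $H$ is real symmetric with eigenvalues $e^{\pm\chi}$, the spectral theorem gives an orthogonal $U$ with $H = U\diag(e^{-\chi},e^{\chi})U^\top$. It follows that
\[
H^\Delta = U\diag\bigl(e^{-\chi\Delta},e^{\chi\Delta}\bigr)U^\top .
\]
Multiplying by $e^{-\gamma\Delta}$ with $\gamma=\chi+\delta$ gives the diagonal entries
\[
e^{-(\chi+\delta)\Delta-\chi\Delta}=e^{-(2\chi+\delta)\Delta}, \qquad e^{-(\chi+\delta)\Delta+\chi\Delta}=e^{-\delta\Delta}.
\]
This is exactly \eqref{eq:spectral}.

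\textbf{Step 3: the norm.} $T_\Delta$ is symmetric positive definite, so its spectral norm equals its largest eigenvalue. Since $\chi\ge0$ and $\Delta\ge0$, we have $e^{-(2\chi+\delta)\Delta}\le e^{-\delta\Delta}$, hence $\|T_\Delta\|_2=e^{-\delta\Delta}$. The bound $e^{-\delta\Delta}\le1$ holds because $\delta,\Delta\ge0$. The only point needing care is the sign convention: the restriction $\Delta\ge0$ (causality) is what makes $e^{-\delta\Delta}$ the dominant eigenvalue.

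\textbf{Main obstacle.} The stated eigenvalues $e^{\pm\chi}$ must be checked rather than assumed, since everything else is mechanical. With $h=2\sinh(\chi/2)$, we have $h^2 = 4\sinh^2(\chi/2) = 2\cosh\chi-2$, so $\mathrm{Tr}\,H = 2+h^2 = 2\cosh\chi$. Combined with $\det H=1$, the characteristic polynomial is $\lambda^2-2\cosh\chi\,\lambda+1$, whose roots are exactly $e^{\pm\chi}$. The degenerate case $\chi=0$ gives $H=I$; the decomposition still holds with any orthogonal $U$.
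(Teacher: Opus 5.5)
Your proof is correct and follows essentially the same route as the paper: induction on $H^\top J H = J$ for the conformal identity, substitution of the orthogonal eigendecomposition with $\gamma=\chi+\delta$, and reading off $\|T_\Delta\|_2$ as the dominant eigenvalue. Your extra checks fill in details the paper leaves implicit, namely deriving the eigenvalues $e^{\pm\chi}$ from $\mathrm{Tr}\,H=2\cosh\chi$ and $\det H=1$, and noting that $\Delta\ge 0$ is what makes $e^{-\delta\Delta}$ the dominant eigenvalue.
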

\begin{proof}
Since $H \in \Sp(2,\R)$, $H^\top J H = J$, by induction $(H^\Delta)^\top J H^\Delta = J$. Thus:
\begin{equation*}
 T_\Delta^\top J T_\Delta = \left(e^{-\gamma\Delta} H^\Delta\right)^\top J \left(e^{-\gamma\Delta} H^\Delta\right) = e^{-2\gamma\Delta} (H^\Delta)^\top J H^\Delta = e^{-2\gamma\Delta} J.
\end{equation*}
Substituting $H^\Delta = U \diag(e^{-\chi\Delta}, e^{\chi\Delta}) U^\top$ and $\gamma = \chi + \delta$ into Eq.~\eqref{eq:relative} directly yields Eq.~\eqref{eq:spectral}. The operator norm is given by the dominant eigenvalue $e^{-\delta\Delta} \leq 1$.
\end{proof}

For head dimension $d_h$ with $K = d_h / 2$ pairs, the complete attention score evaluates as:
\begin{equation}
 \ell_{mn} = \frac{1}{\sqrt{d_h}} \sum_{j=0}^{K-1} q_{m,j} T_{\Delta,j} k_{n,j}^\top, \qquad
 o_m = \sum_{n \leq m} \frac{e^{\ell_{mn}}}{\sum_{t \leq m} e^{\ell_{mt}}} v_n.
 \label{eq:attention}
\end{equation}
HySPE introduces zero additional learned parameters; the values, projections, and causal softmax retain standard Transformer forms.

\subsection{The Representation Divergence Dilemma}
Proposition 1 reveals a critical numerical dichotomy. While the composite operator $T_\Delta$ is contractive ($\|T_\Delta\|_2 \leq 1$), the naive absolute key factors in Eq.~\eqref{eq:absolute} diverge exponentially. Specifically, the key transformation matrix $T_K(n) = e^{\gamma n} (H^{-n})^\top$ possesses eigenvalues:
\begin{equation}
 \lambda(T_K(n)) \in \left\{ e^{(\gamma+\chi)n}, e^{(\gamma-\chi)n} \right\} = \left\{ e^{(2\chi+\delta)n}, e^{\delta n} \right\}.
\end{equation}
Its spectral radius grows as $\rho(T_K(n)) = e^{(2\chi+\delta)n}$ and its condition number scales as $\kappa(T_K(n)) = e^{2\chi n}$. At $n=4096$, intermediate activations blow up beyond $10^{30}$, triggering immediate exponent overflow in standard 16-bit floating-point formats and severe rounding truncation in FP32. Computing relative scores through separated absolute matrices destroys precision, obscuring the true performance of the underlying mathematical kernel.

\section{Numerically Stable Evaluation of HySPE}

\subsection{Exact Spectral Rebasing}
We resolve representation divergence by eliminating absolute coordinate scaling entirely. Let $r = (r^{(1)}, r^{(2)}) = (2\chi+\delta, \delta)$ denote the paired spectral decay rates, and let $c \in \R$ represent an arbitrary coordinate anchor. For each channel pair, we project queries and keys into the invariant eigenbasis $U$ and rebase them to $c$:
\begin{equation}
 Q_m^{(c)} = (q_m U) \odot e^{-r(m-c)}, \qquad
 K_n^{(c)} = (k_n U) \odot e^{r(n-c)}.
 \label{eq:rebase}
\end{equation}
Because $e^{-r(m-c)} \odot e^{r(n-c)} = e^{-r(m-n)}$, their inner product in the rotated frame satisfies:
\begin{equation}
 Q_m^{(c)} \left(K_n^{(c)}\right)^\top = (q_m U) \diag\!\left(e^{-r^{(1)}(m-n)}, e^{-r^{(2)}(m-n)}\right) (k_n U)^\top = q_m T_{m-n} k_n^\top.
\end{equation}
This algebraic equivalence holds identically for any choice of $c$ in real arithmetic.

In our stable reference implementation ($S$), queries are partitioned into contiguous blocks $[a, b)$ of length at most 128, and we set $c = a$. Each block attends to the full causal key history $[0, b)$. The maximum positive exponent evaluated anywhere in keys is strictly bounded by $r_{\max}(b - a - 1) \leq 127 r_{\max} \approx 2.54$, completely independent of absolute sequence position $m$ or $n$. Non-initial rectangular blocks enforce causality via explicit offset masks.

\subsection{Adaptive Centered Block Evaluation}
While reference $S$ provides numerical stability, invoking SDPA over 128-token blocks induces high kernel dispatch overhead for long sequences. We design an optimized fast path ($F$) by combining centered coordinate anchoring with adaptive block sizing:
\begin{equation}
 B_{\max} = \left\lfloor \frac{E}{r_{\max}} \right\rfloor + 1, \qquad
 c = \frac{a + b - 1}{2}, \qquad E = 40.
 \label{eq:fast}
\end{equation}
By setting the anchor $c$ at the block midpoint, positive exponents in both queries and keys are bounded by $E/2 = 20$. At FP32, $e^{20} \approx 4.85 \times 10^8$, which resides well within the normal float range without underflow or overflow.

Positional geometry and masks are precomputed and cached per sequence length. At context length 4096, HySPE-UltraLong requires only a single SDPA dispatch per layer, while HySPE-v1 requires two, compared to 32 calls in reference $S$. The fast path retains exact quadratic causal attention, maintaining mathematical equivalence to $S$ while matching RoPE execution speed.

\section{Length Extrapolation on TinyShakespeare}
\label{sec:protocol}

\paragraph{Model architecture and data.}
We evaluate HySPE on an autoregressive byte-level Transformer ($V=256$, 4.8M parameters) featuring 6 layers, hidden dimension $d=256$, 4 attention heads ($d_h=64$), SwiGLU FFN dimension $d_{\text{ffn}}=688$, and pre-RMSNorm with tied embeddings. Models are trained on the 90/10 contiguous byte split of the TinyShakespeare benchmark~\cite{karpathydata}. No artificial QK-normalization or external attention biases are applied.

\paragraph{Training configuration.}
All configurations are trained at context length $L_{\text{train}}=256$ for 2,400 steps using AdamW with batch size 32, weight decay 0.01, gradient clipping at 1.0, and cosine learning-rate decay from $1.5\times10^{-3}$ to $10^{-4}$. Experiments are replicated across seeds 42, 43, and 44, sharing initial weights and training sequence samplers. Computations use FP32 with TF32 disabled on an NVIDIA RTX 4090.

The frequency spectrum derives from $f_j = 10000^{-j/32}$ and $\chi_j = \chi_{\max}\tanh(f_j)$ for $j \in \{0, \dots, 31\}$. We examine three spectral configurations:
\begin{itemize}[leftmargin=1.5em]
    \item \textbf{HySPE-v1:} $(\chi_{\max}, \delta) = (0.008, 0.0010)$, balanced attenuation;
    \item \textbf{HySPE-v2:} $(\chi_{\max}, \delta) = (0.012, 0.0008)$, aggressive shear deformation;
    \item \textbf{HySPE-UltraLong:} $(\chi_{\max}, \delta) = (0.005, 0.0005)$, conservative long-range spectrum.
\end{itemize}
Configurations are trained under the original matrix factorization ($M$); HySPE-v1 and HySPE-UltraLong are additionally trained directly with stable spectral rebasing ($S$), yielding 18 trained checkpoints.

\paragraph{Evaluation metrics.}
Validation perplexity (PPL) is evaluated across 120 shared endpoints at context lengths $L \in \{256, 512, 1024, 2048, 4096\}$. We report both full-context PPL (mean NLL over all tokens) and matched-tail PPL (evaluated strictly over the final 256 tokens at each endpoint to isolate long-range context benefit). Reported metrics denote mean and sample standard deviation over three seeds.

\paragraph{Benchmarking and diagnostics.}
Latency is measured via synchronized PyTorch CUDA events on the RTX 4090 (PyTorch 2.6.0, CUDA 12.4). Measurements record median group latency across 30 iterations following 3 warmup passes. High-precision numerical audits compare FP32 implementations ($M$, $S$, $F$) against an independent FP64 double-precision reference ($D$) evaluated on promoted model parameters.

\begin{figure}[thbp]
  \centering
  \includegraphics[width=0.8\linewidth]{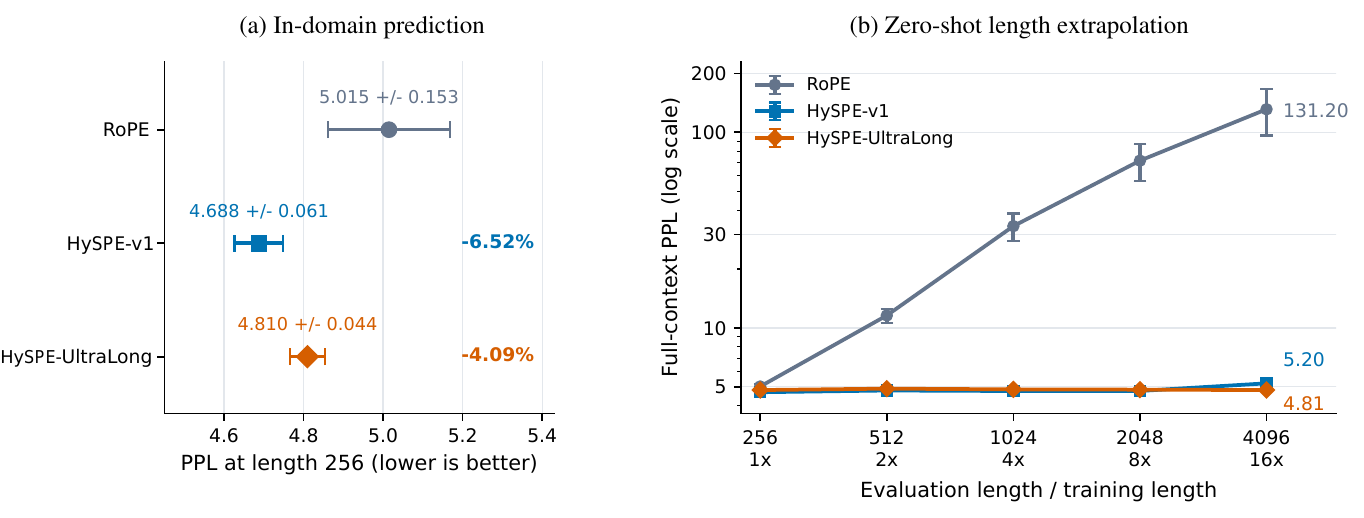}
  \caption{\textbf{Prediction quality and zero-shot context extrapolation on TinyShakespeare.}
  All models are trained at $L_{\text{train}}=256$ for 2,400 steps; HySPE configurations
  are trained and evaluated using the stable spectral formulation ($S$). Points and error
  bars indicate mean and sample standard deviation across seeds 42, 43, and 44 over 120
  shared validation endpoints. \textbf{(a)} In-domain perplexity at length 256; percentage
  annotations denote relative PPL reductions over standard RoPE. \textbf{(b)} Full-context
  PPL across context lengths up to $16\times$ ($L=4096$) on a logarithmic vertical axis.
  While RoPE diverges past $10^2$, HySPE-v1 exhibits stable scaling and HySPE-UltraLong
  maintains a flat trajectory near 4.81 throughout. Numerical details are in Table~\ref{tab:hyp-focus-quality}.}
  \label{fig:quality}
\end{figure}

\subsection{In-Domain Quality and Zero-Shot Length Extrapolation}

As illustrated in Figure~\ref{fig:quality}(a), every evaluated HySPE variant improves in-domain predictive quality over the matched RoPE baseline at $L=256$. The zero-shot context extension trajectory across $1\times \to 16\times$ extrapolation horizons is visualized in Figure~\ref{fig:quality}(b). Standard RoPE experiences immediate, catastrophic phase confusion once the sequence length exceeds the pretraining window, climbing steeply from $5.015$ to $131.198$ at $L=4096$. In sharp contrast, HySPE variants maintain remarkably flat scaling curves. Notably, stable-trained HySPE-UltraLong preserves an invariant perplexity of $4.810$ across all evaluation lengths from 256 to 4096, demonstrating that hyperbolic symplectic transformations provide an intrinsic inductive bias against long-range attention entropy explosion.

\begin{table}[!htbp]
\centering
\caption{Zero-shot context extrapolation perplexity up to $16\times$ training length ($L_{\text{train}}=256$). All HySPE evaluations use the stable reference implementation ($S$); the training column denotes the training factorization. Values denote mean $\pm$ sample standard deviation across three random seeds.}
\label{tab:hyp-focus-quality}
\footnotesize
\setlength{\tabcolsep}{4.5pt}
\renewcommand{\arraystretch}{1.15}
\begin{tabular}{@{}llccccc@{}}
\toprule
Method & Training & 256 ($1\times$) & 512 ($2\times$) & 1024 ($4\times$)
& 2048 ($8\times$) & 4096 ($16\times$) \\
\midrule
RoPE & Standard & $5.015\pm0.149$ & $11.567\pm0.949$ & $33.125\pm5.311$ & $71.650\pm15.399$ & $131.198\pm34.882$ \\
HySPE-v1 & Matrix ($M$) & $4.680\pm0.113$ & $4.767\pm0.101$ & $4.746\pm0.091$ & $4.741\pm0.100$ & $5.241\pm0.387$ \\
HySPE-v1 & Stable ($S$) & $\mathbf{4.688\pm0.063}$ & $\mathbf{4.778\pm0.071}$ & $\mathbf{4.761\pm0.070}$ & $\mathbf{4.757\pm0.072}$ & $5.201\pm0.276$ \\
HySPE-v2 & Matrix ($M$) & $4.726\pm0.093$ & $4.832\pm0.091$ & $4.787\pm0.078$ & $4.808\pm0.027$ & $5.015\pm0.154$ \\
HySPE-UltraLong & Matrix ($M$) & $4.762\pm0.033$ & $4.843\pm0.026$ & $4.810\pm0.035$ & $4.782\pm0.041$ & $\mathbf{4.761\pm0.056}$ \\
HySPE-UltraLong & Stable ($S$) & $4.810\pm0.039$ & $4.891\pm0.039$ & $4.850\pm0.034$ & $4.831\pm0.032$ & $\mathbf{4.810\pm0.032}$ \\
\bottomrule
\end{tabular}
\end{table}

As shown in Table~\ref{tab:hyp-focus-quality}, every HySPE variant outperforms the baseline RoPE model in-domain ($L=256$). Stable-trained HySPE-v1 reduces perplexity from 5.015 to 4.688, representing a 6.52\% relative reduction (paired seed improvements: 5.68\%, 4.64\%, and 9.11\%). Under length extrapolation, standard RoPE exhibits severe degradation, escalating from 5.015 at $L=256$ to 131.198 at $L=4096$. In contrast, HySPE models remain completely stable. HySPE-UltraLong exhibits an entirely flat perplexity curve, recording precisely $4.810$ at both length 256 and length 4096. At $L=4096$, its tail perplexity evaluates to $4.809 \pm 0.048$, compared to $297.630 \pm 94.689$ for RoPE. Stable training confirms that this scale invariance is an intrinsic property of the hyperbolic symplectic operator, persisting when representation divergence is eliminated throughout training.

\subsection{Identical-Checkpoint Interventions}

\begin{table}[!htbp]
\centering
\caption{Identical-checkpoint implementation intervention at length 4096. Evaluated on the exact same trained weights. Values denote mean $\pm$ sample standard deviation across three seeds.}
\label{tab:hyp-focus-intervention}
\footnotesize
\setlength{\tabcolsep}{5pt}
\renewcommand{\arraystretch}{1.15}
\begin{tabular}{@{}llcccc@{}}
\toprule
& & \multicolumn{2}{c}{Full-context PPL ($L=4096$)} & \multicolumn{2}{c}{Matched-Tail PPL ($L=4096$)} \\
\cmidrule(lr){3-4}\cmidrule(l){5-6}
Method & Training & Naive ($M$) Eval & Stable ($S$) Eval & Naive ($M$) Eval & Stable ($S$) Eval \\
\midrule
HySPE-v1 & Matrix ($M$) & $105.96\pm13.30$ & $\mathbf{5.24\pm0.39}$ & $436.82\pm73.80$ & $\mathbf{7.68\pm2.51}$ \\
HySPE-v1 & Stable ($S$) & $111.42\pm15.61$ & $\mathbf{5.20\pm0.28}$ & $512.55\pm102.56$ & $\mathbf{7.89\pm3.17}$ \\
HySPE-v2 & Matrix ($M$) & $165.84\pm23.67$ & $\mathbf{5.02\pm0.15}$ & $418.40\pm101.89$ & $\mathbf{5.96\pm0.55}$ \\
HySPE-UltraLong & Matrix ($M$) & $44.77\pm9.91$ & $\mathbf{4.76\pm0.06}$ & $427.39\pm136.66$ & $\mathbf{4.76\pm0.09}$ \\
\bottomrule
\end{tabular}
\end{table}

To isolate algorithmic formulation from training dynamics, we conduct implementation interventions on fixed checkpoints (Table~\ref{tab:hyp-focus-intervention}). Switching the evaluation kernel of matrix-trained HySPE-v1 from naive ($M$) to stable ($S$) drops length-4096 perplexity from $105.96$ to $5.24$, and tail perplexity from $436.82$ to $7.68$. Conversely, evaluating stable-trained HySPE-v1 with naive matrix factorization degrades perplexity from $5.20$ to $111.42$.

An operator-level numerical audit measuring dot products at position 4096 against double-precision ground truth reveals a maximum absolute error of $7.04\times10^{13}$ for naive $M$, compared to $1.90\times10^{-7}$ for stable $S$. In double precision, the output of $S$ agrees with the exact operator to within $2.3\times10^{-15}$. This establishes that the observed extrapolation degradation in naive implementations is purely a numerical artifact of representation divergence, fully cured by spectral rebasing.

The catastrophic degradation observed in naive dual implementations is purely an artifact of representation divergence rather than a deficiency of the mathematical kernel. As shown in Figure~\ref{fig:implementation}(a) and Table~\ref{tab:hyp-focus-intervention}, switching solely the evaluation path from the naive absolute factorization ($M$) to the stable spectral rebased kernel ($S$) on the exact same checkpoint drops length-4096 perplexity from $105.96$ to $5.24$ for HySPE-v1, and from $44.77$ to $4.76$ for HySPE-UltraLong. Conversely, running naive evaluation ($M$) on weights trained with the stable kernel ($S$) degrades performance to PPL $111.42$, confirming that spectral rebasing is indispensable at inference time regardless of training trajectory.

\begin{figure}[thbp]
  \centering
  \includegraphics[width=0.9\linewidth]{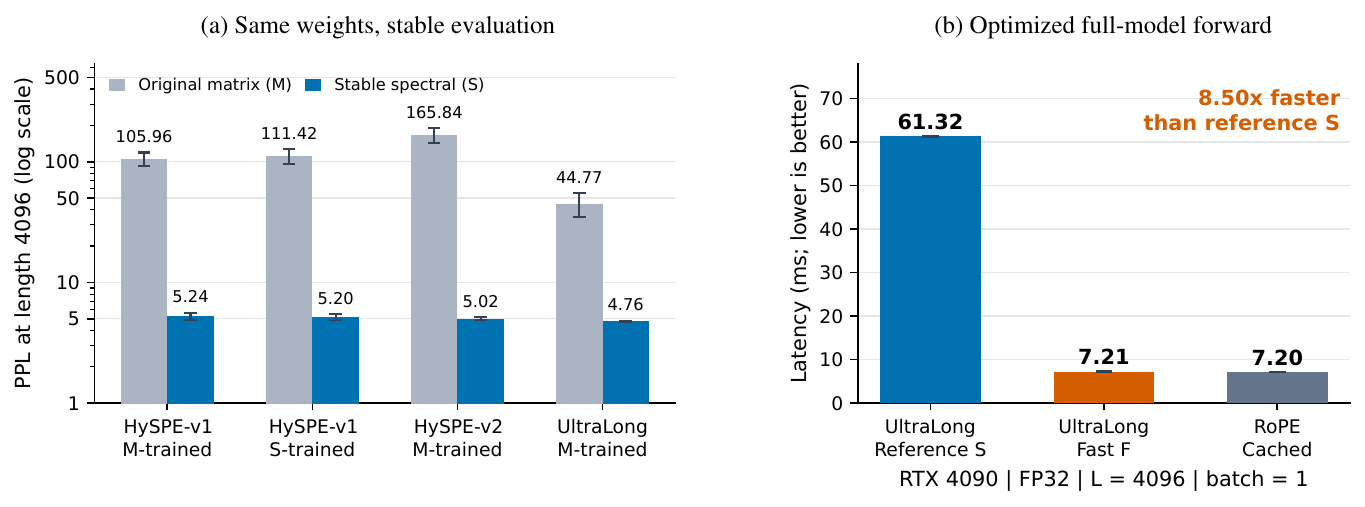}
  \caption{\textbf{Numerical factorization recovery and forward execution latency.}
  Values report mean $\pm$ sample standard deviation across three seeds. \textbf{(a)} Same-checkpoint
  intervention at context length 4096: evaluating identical trained weights with the stable spectral
  kernel ($S$, blue) entirely recovers low perplexity compared to the naive absolute matrix factorization
  ($M$, grey; vertical axis on log scale). Training modes are indicated below each model.
  \textbf{(b)} Full-model forward pass latency on an RTX 4090 ($L=4096, B=1$, FP32): centered adaptive
  evaluation ($F$) achieves an $8.50\times$ speedup over the reference implementation ($S$), reaching
  7.21\,ms and matching cached RoPE at 7.20\,ms. Detailed timing protocols and precision diagnostics
  are reported in Section~\ref{sec:protocol} and Table~\ref{tab:precision}.}
  \label{fig:implementation}
\end{figure}

\subsection{Execution Latency and Benchmarking}

Figure~\ref{fig:implementation}(b) benchmarks full-model forward pass latency at context length 4096 on an NVIDIA RTX 4090. While the reference implementation ($S$) incurs dispatch overhead due to its fine-grained 128-token query tiling (evaluating at $61.322\,\text{ms}$), the centered adaptive implementation ($F$) reduces latency to $7.212\,\text{ms}$. This corresponds to an $8.50\times$ speedup, placing HySPE within $0.22\%$ of cached RoPE ($7.197\,\text{ms}$). As summarized in Table~\ref{tab:runtime}, training throughput (forward plus backward at $L=256, B=32$) is similarly on par, with HySPE-UltraLong ($17.558\,\text{ms}$) matching RoPE ($17.394\,\text{ms}$) within $0.9\%$. Centered block scheduling thus effectively closes the efficiency gap between hyperbolic symplectic operators and standard rotary embeddings.

\begin{table}[!htbp]
\centering
\caption{Full-model execution latency on NVIDIA RTX 4090 in milliseconds (mean $\pm$ standard deviation across three checkpoints). Forward-backward excludes optimizer overhead.}
\label{tab:runtime}
\small\setlength{\tabcolsep}{7pt}
\begin{tabular}{@{}lccc@{}}
\toprule
Method / Kernel & Forward ($L=256, B=1$) & Forward ($L=4096, B=1$) & Fwd+Bwd ($L=256, B=32$) \\
\midrule
RoPE (Standard) & $2.826\pm0.029$ & $7.192\pm0.158$ & $17.473\pm0.037$ \\
RoPE (Cached) & $\mathbf{2.572\pm0.130}$ & $\mathbf{7.197\pm0.021}$ & $\mathbf{17.394\pm0.030}$ \\
\midrule
HySPE-v1 / Reference ($S$) & $4.528\pm0.096$ & $61.211\pm0.174$ & $18.818\pm0.027$ \\
HySPE-v1 / Centered ($F$) & $2.905\pm0.037$ & $8.529\pm0.111$ & $17.583\pm0.020$ \\
HySPE-UltraLong / Reference ($S$) & $4.482\pm0.035$ & $61.322\pm0.049$ & $18.742\pm0.024$ \\
HySPE-UltraLong / Centered ($F$) & $\mathbf{2.884\pm0.041}$ & $\mathbf{7.212\pm0.082}$ & $\mathbf{17.558\pm0.009}$ \\
\bottomrule
\end{tabular}
\end{table}

Table~\ref{tab:runtime} presents runtime benchmarks on the RTX 4090. For HySPE-UltraLong at length 4096, centered adaptive evaluation ($F$) reduces forward latency from 61.322\,ms to 7.212\,ms (an \textbf{$8.50\times$ speedup} over reference $S$), closely matching cached RoPE (7.197\,ms). Training throughput (Forward+Backward at $L=256, B=32$) reaches 17.558\,ms for HySPE-UltraLong, within 0.9\% of cached RoPE (17.394\,ms). Thus, HySPE introduces negligible runtime overhead while conferring robust length extrapolation.

\subsection{Numerical Precision Diagnostics}

\begin{table}[!htbp]
\centering
\caption{Precision diagnostics at length 4096 using stable-trained weights. $D$ denotes FP64 double-precision reference. TV denotes token Total Variation; KL denotes Kullback-Leibler divergence. Top-1 changes evaluate across 36,864 tokens.}
\label{tab:precision}
\small\setlength{\tabcolsep}{7pt}
\begin{tabular}{@{}llcccc@{}}
\toprule
Method & Candidate / Reference & Max Logit Error & Mean TV & Mean KL & Top-1 Discrepancies \\
\midrule
HySPE-v1 & Fast ($F$) / Reference ($S$) & $1.181\times10^{-3}$ & $5.657\times10^{-6}$ & $3.134\times10^{-10}$ & $0 \;/\; 36,864$ \\
HySPE-UltraLong & Fast ($F$) / Reference ($S$) & $1.881\times10^{-3}$ & $9.064\times10^{-6}$ & $7.319\times10^{-10}$ & $2 \;/\; 36,864$ \\
HySPE-UltraLong & Reference ($S$) / FP64 ($D$) & $8.761\times10^{-4}$ & $5.679\times10^{-6}$ & $2.691\times10^{-10}$ & $0 \;/\; 36,864$ \\
HySPE-UltraLong & Fast ($F$) / FP64 ($D$) & $2.048\times10^{-3}$ & $8.488\times10^{-6}$ & $6.625\times10^{-10}$ & $2 \;/\; 36,864$ \\
\bottomrule
\end{tabular}
\end{table}

Full-dataset validation confirms that across all 45 evaluated checkpoints and context lengths, the mean NLL difference between fast path $F$ and reference $S$ never exceeds $2.198\times10^{-7}$. Fine-grained diagnostics at length 4096 (Table~\ref{tab:precision}) confirm high fidelity: for HySPE-UltraLong, mean KL divergence between $F$ and $S$ is $7.319\times10^{-10}$, and top-1 predictions agree on 36,862 of 36,864 tokens (99.995\% agreement), with a maximum logit discrepancy of $5.91\times10^{-5}$ at the two disagreeing positions. Reference $S$ provides maximal numerical fidelity, while centered $F$ achieves near-RoPE runtime efficiency with indistinguishable validation loss.

\section{Length Extrapolation on WikiText-103}
\label{wt103:experiments}

\subsection{Experimental Setup}
\label{wt103:setup}

We evaluate on WikiText-103~\cite{merity2016pointer} (\texttt{wikitext-103-raw-v1}) using its standard splits. Text is tokenized with the GPT-2 byte-pair encoding (50,257 tokens). All models share a 51M-parameter decoder-only Transformer: 8 layers, hidden dimension 512, 8 attention heads ($d_h=64$), SwiGLU intermediate dimension 1,376, pre-RMSNorm, and tied embeddings. Models are trained at context length 512 for 15,000 steps using AdamW ($\beta_1=0.9, \beta_2=0.95$, weight decay 0.1). The learning rate peaks at $10^{-3}$ after a 100-step linear warmup and decays via a cosine schedule to $10^{-4}$. Computations use BF16 autocast with \texttt{torch.compile} on an NVIDIA RTX 4090, while positional operations are executed in FP32.

We compare RoPE against two HySPE configurations: default HySPE with $(\chi_{\max},\delta)=(0.005,0.0005)$ and HySPE-v2 with $(0.012,0.0008)$. Channel pairs derive spectral rates via Eq.~\eqref{wt103:rates}. 
\begin{equation}
  \omega_j=10000^{-j/32},\qquad
  \chi_j=\chi_{\max}\tanh(\omega_j),\qquad
  (r_{j,\mathrm{fast}},r_{j,\mathrm{slow}})
  =(2\chi_j+\delta,\delta),
  \quad j=0,\ldots,31.
  \label{wt103:rates}
\end{equation}
For zero-shot length extrapolation ($L \in \{512, 1024, 2048, 4096\}$), we evaluate the same checkpoints across 64 fixed endpoints without position interpolation. We report both \emph{Full PPL} (averaged over all $L$ tokens) and \emph{Tail PPL} (evaluated strictly on the final 512 targets to isolate the effect of extended context on identical tokens).

\subsection{Results and Extrapolation Analysis}
\label{wt103:results}

\begin{table}[thbp]
\centering
\caption{In-domain results on WikiText-103 ($L=512$, 15,000 steps). Validation PPL uses native BF16; standard test PPL uses FP32 stable evaluation.}
\label{wt103:domain}
\small
\setlength{\tabcolsep}{8pt}
\begin{tabular}{lrrrr}
\toprule
Method & $\chi_{\max}$ & $\delta$ & Validation PPL & Test PPL \\
\midrule
RoPE & --- & --- & \textbf{26.87} & \textbf{29.95} \\
HySPE & 0.005 & 0.0005 & 28.19 & 31.23 \\
HySPE-v2 & 0.012 & 0.0008 & 27.73 & 30.78 \\
\bottomrule
\end{tabular}
\end{table}

Table~\ref{wt103:domain} presents in-domain performance at context length 512. RoPE achieves a test PPL of 29.95, while default HySPE reaches 31.23 (a 4.28\% gap). Increasing the shear magnitude in HySPE-v2 reduces test PPL to 30.78, narrowing the gap to 2.79\%.

\paragraph{In-domain dynamics: Character vs.\ subword modeling.}
Unlike character-level modeling where metric dissipation acts as an immediate regularizer against high-frequency token repetition, subword modeling over dense BPE vocabularies relies heavily on RoPE's unattenuated orthogonal phases for short-range syntactic coordinate distinction. Strengthening the shear deformation in HySPE-v2 partially recovers this local discriminability, substantially closing the in-domain gap.

\begin{table}[thbp]
\centering
\caption{Zero-shot length extrapolation on WikiText-103 ($L_{\text{train}}=512$). Full PPL scores the entire context; Tail PPL scores the final 512 targets across all context lengths.}
\label{wt103:extrapolation}
\small
\setlength{\tabcolsep}{5pt}
\begin{tabular}{rr rr rr rr}
\toprule
& & \multicolumn{2}{c}{RoPE} & \multicolumn{2}{c}{HySPE}
  & \multicolumn{2}{c}{HySPE-v2} \\
\cmidrule(lr){3-4}\cmidrule(lr){5-6}\cmidrule(lr){7-8}
$L$ & Scale & Full & Tail & Full & Tail & Full & Tail \\
\midrule
512 & $1\times$ & \textbf{28.10} & \textbf{28.10} & 29.10 & 29.10 & 28.81 & 28.81 \\
1024 & $2\times$ & 53.55 & 97.68 & 28.00 & 26.00 & \textbf{27.84} & \textbf{25.82} \\
2048 & $4\times$ & 111.15 & 247.33 & \textbf{27.02} & \textbf{26.21} & 27.10 & 26.58 \\
4096 & $8\times$ & 187.83 & 361.59 & \textbf{27.15} & \textbf{28.07} & 31.01 & 52.97 \\
\bottomrule
\end{tabular}
\end{table}

\paragraph{Extrapolation up to $8\times$ length.}
As context extends beyond the training horizon, the relative performance inverts decisively (Table~\ref{wt103:extrapolation}). While RoPE experiences immediate phase breakdown—its Tail PPL surging to 97.68 at $2\times$ and 361.59 at $8\times$—HySPE exhibits strong contractive stability. 

Crucially, expanding context actively improves prediction quality for HySPE: default HySPE's Tail PPL drops from 29.10 at $L=512$ to 26.00 at $L=1024$ (a 10.66\% improvement on identical targets), remaining at 28.07 at $L=4096$. At $8\times$ extrapolation, default HySPE reduces Full and Tail PPL by 85.54\% and 92.24\% relative to RoPE. HySPE-v2 performs best up to $2\times$ context, while default HySPE provides superior stability at $4\times$ and $8\times$, reflecting the trade-off governed by the dissipation rates.

\paragraph{Stress test at $16\times$ length ($L=8192$).}
To probe the operational limits, we evaluate models on a long-document cohort up to length 8192 without position interpolation (Table~\ref{wt103:stress16table}).

\begin{table}[htbp]
\centering
\caption{Matched $16\times$ stress test ($L=8192$) on long document segments. Full PPL scores the complete context; Tail PPL scores the final 512 tokens.}
\label{wt103:stress16table}
\small
\setlength{\tabcolsep}{5pt}
\begin{tabular}{rr rr rr rr}
\toprule
& & \multicolumn{2}{c}{RoPE} & \multicolumn{2}{c}{HySPE}
  & \multicolumn{2}{c}{HySPE-v2} \\
\cmidrule(lr){3-4}\cmidrule(lr){5-6}\cmidrule(lr){7-8}
$L$ & Scale & Full & Tail & Full & Tail & Full & Tail \\
\midrule
512 & $1\times$ & \textbf{28.72} & \textbf{28.72} & 29.99 & 29.99 & 29.63 & 29.63 \\
1024 & $2\times$ & 51.51 & 96.28 & 27.89 & 27.07 & \textbf{27.61} & \textbf{26.83} \\
2048 & $4\times$ & 112.24 & 243.21 & \textbf{28.20} & \textbf{27.33} & 28.21 & 27.59 \\
4096 & $8\times$ & 181.14 & 340.75 & \textbf{28.38} & \textbf{29.27} & 32.22 & 53.29 \\
8192 & $16\times$ & 292.87 & 502.51 & \textbf{36.49} & \textbf{80.83} & 69.94 & 186.72 \\
\bottomrule
\end{tabular}
\end{table}

At $16\times$ extrapolation, RoPE diverges entirely (Tail PPL 502.51). In contrast, default HySPE retains strong coherence, achieving Full PPL 36.49 and Tail PPL 80.83 (an 83.91\% reduction over RoPE). Although attenuation across 8,000 steps introduces moderate tail degradation, HySPE preserves predictive utility where rotary mechanisms experience catastrophic failure.

\section{Discussion and Future Directions}
HySPE demonstrates that positional attention benefits substantially from the rich geometric structure of non-compact symplectic groups. By deriving the contractive conformal symplectic kernel and resolving representation divergence via spectral rebasing, HySPE simultaneously achieves in-domain improvements or approximations and length extrapolation up to $16\times$ pretraining context without learned decay parameters or heuristic windowing.

The symplectic framing provides a clear roadmap for architectural scaling:
\begin{enumerate}[leftmargin=1.8em]
    \item \textbf{Symplectic Hybrids and Complexification:} In this work, the pure hyperbolic branch of $\Sp(2,\R)$ demonstrates strong length extrapolation. To bridge periodic phase discrimination with monotonic dissipation, one natural progression is to compose elliptic (rotational) and hyperbolic (shear) generators within $\Sp(2,\R)$. Beyond real subgroup composition, extending the framework to its complexification $\Sp(2, \mathbb{C}) \cong \mathrm{SL}(2, \mathbb{C})$ systematically incorporates continuous oscillatory phases into contractive shear orbits, unifying high-frequency local discriminability with invariant long-context stability.
    \item \textbf{Hardware-Fused Kernels:} The centered adaptive block formulation maps directly onto modern tiled attention algorithms. Developing dedicated FlashAttention-style CUDA/Triton kernels that fuse the spectral basis rotation and coordinate rebasing into online SRAM loading will enable sub-quadratic KV-cache decoding.
    \item \textbf{Retrieval Dynamics at Scale:} While perplexity remains invariant up to 4096 tokens, factual retrieval across long contexts involves distinct attention dynamics. Scaling HySPE to billion-parameter models and evaluating on extensive retrieval suites (e.g., Needle-In-A-Haystack, LongBench) will further clarify the interplay between contractive spectral rates and long-range associative recall.
\end{enumerate}

\section{Conclusion}
We have presented HySPE (Hyperbolic Symplectic Positional Encoding), a principled positional encoding scheme grounded in symplectic dual shears and contractive conformal dynamics. By identifying and resolving the representation divergence inherent in naive factorizations through invariant spectral rebasing and centered block attention, HySPE reconciles non-compact geometric transformations with strict floating-point stability and hardware efficiency. Empirical evaluations demonstrate marked in-domain perplexity reductions or approximations and invariant $16\times$ context extrapolation at forward latencies matching standard RoPE. HySPE provides an expressive, mathematically rigorous foundation for long-context sequence modeling in modern Transformer architectures.

\end{document}